\newcommand{\vx}{\mathbf{x}}
\newcommand{\vy}{\mathbf{y}}
\newcommand{\vt}{\mathbf{t}}
\newcommand{\vi}{\mathbf{i}}
\newcommand{\vI}{\mathbf{I}}
\newcommand{\vT}{\mathbf{T}}
\newcommand{\vtheta}{\boldsymbol{\theta}}
\newcommand{\vdelta}{\boldsymbol{\delta}}
\newcommand{\vmu}{\boldsymbol{\mu}}
\newcommand{\mF}{\mathbf{F}}
\newcommand{\mH}{\mathbf{H}}
\newcommand{\mK}{\mathbf{K}}
\newcommand{\mR}{\mathbf{R}}
\newcommand{\mQ}{\mathbf{Q}}
\newcommand{\mO}{\mathbf{O}}
\newcommand{\R}{\mathbb{R}}
\newcommand{\E}{\mathbb{E}}
\newcommand{\B}{\mathcal{B}}
\newcommand{\mI}{\mathbb{I}}
\newcommand{\gN}{\mathcal{N}}
\newcommand{\D}{\mathcal{D}}
\newcommand{\mA}{\mathbf{A}}
\newcommand{\mS}{\mathcal{S}}
\newcommand{\mSigma}{\mathbf{\Sigma}}
\newcommand{\mU}{\mathbf{U}}
\newcommand{\mV}{\mathbf{V}}
\newcommand{\mC}{\mathbf{C}}
\newtheorem{theorem}{Theorem}[section]
\newtheorem{lemma}[theorem]{Lemma}
\newtheorem{proposition}[theorem]{Proposition}
\def\BibTeX{{\rm B\kern-.05em{\sc i\kern-.025em b}\kern-.08em
    T\kern-.1667em\lower.7ex\hbox{E}\kern-.125emX}}
\begin{document}

\title{Bayesian Natural Gradient Fine-Tuning of CLIP Models via Kalman Filtering\\
% Kalman-based Adapter for Few-Shot Fine-Tuning of Multimodal CLIP Models\\
% Low-Rank Kalman Adapter for Multimodal Fine-Tuning
% {\footnotesize \textsuperscript{*}Note: Sub-titles are not captured in Xplore and
% should not be used}
% \thanks{Identify applicable funding agency here. If none, delete this.}
}

\author{
% \IEEEauthorblockN{Anonymous}
% \and

\IEEEauthorblockN{Hossein Abdi}
\IEEEauthorblockA{
% \textit{Department of Computer Science} \\
\textit{The University of Manchester}\\
% Manchester, UK \\
hossein.abdi@manchester.ac.uk}
\and
\IEEEauthorblockN{Mingfei Sun}
\IEEEauthorblockA{
% \textit{Department of Computer Science} \\
\textit{The University of Manchester}\\
% Manchester, UK \\
mingfei.sun@manchester.ac.uk}
\and
\IEEEauthorblockN{Wei Pan}
\IEEEauthorblockA{
% \textit{Department of Computer Science} \\
\textit{The University of Manchester}\\
% Manchester, UK\\
wei.pan@manchester.ac.uk}
}

\maketitle

\begin{abstract}
Vision-language pre-trained models, such as CLIP, have established new benchmarks in multimodal data mining. In such models, few-shot fine-tuning is a major challenge to achieve optimal performance on both in-distribution (ID) and out-of-distribution (OOD) datasets, especially when labeled data is scarce. Most existing fine-tuning approaches rely on first-order gradient-based optimizers, which typically suffer from slow convergence, sensitivity to step-size hyperparameters, and poor generalization in OOD settings. In contrast, second-order methods utilize local curvature information of the loss landscape to adjust the update step size. This is particularly beneficial for CLIP models, whose non-convex loss functions often contain sharp critical points. In such cases, natural gradient direction can offer more substantial and efficient per-iteration updates when fine-tuning with limited data. Natural Gradient Descent (NGD) is obtained by preconditioning the standard gradient with the inverse Fisher Information Matrix (FIM), which is computationally expensive for large models. To address this, we propose a Bayesian approximation of NGD using a Kalman filter for CLIP models. Our method combines the benefits of second-order optimization with Bayesian inference, which enhances generalization while providing uncertainty quantification.
Extensive experiments conducted on diverse image classification datasets demonstrate that our algorithm consistently achieves superior--or comparable--ID performance and improved OOD robustness compared to state-of-the-art baselines. To the best of our knowledge, this work represents the first successful application of Kalman filtering to fine-tuning CLIP-based models, which enables more robust and efficient learning in vision-language tasks.
\end{abstract}

\begin{IEEEkeywords}
Kalman Filter, Multimodal Data Mining, Bayesian Approach, Out-of-Distribution, CLIP model
\end{IEEEkeywords}

\section{Introduction}
\label{introduction}
Pre-trained vision-language models, particularly CLIP \citep{radford2021learning}, have demonstrated remarkable performance in zero-shot and few-shot multimodal learning tasks. However, optimal performance typically requires further adaptation to specific tasks. The effectiveness of transferring knowledge from such pre-trained models is highly sensitive to the distributional alignment between the pre-training (source) data and the task-specific (target) data. In real-world deployments, significant domain shifts can lead to substantial performance degradation.

\begin{figure}[t]
  \centering
  \includegraphics[width=1.0\linewidth]{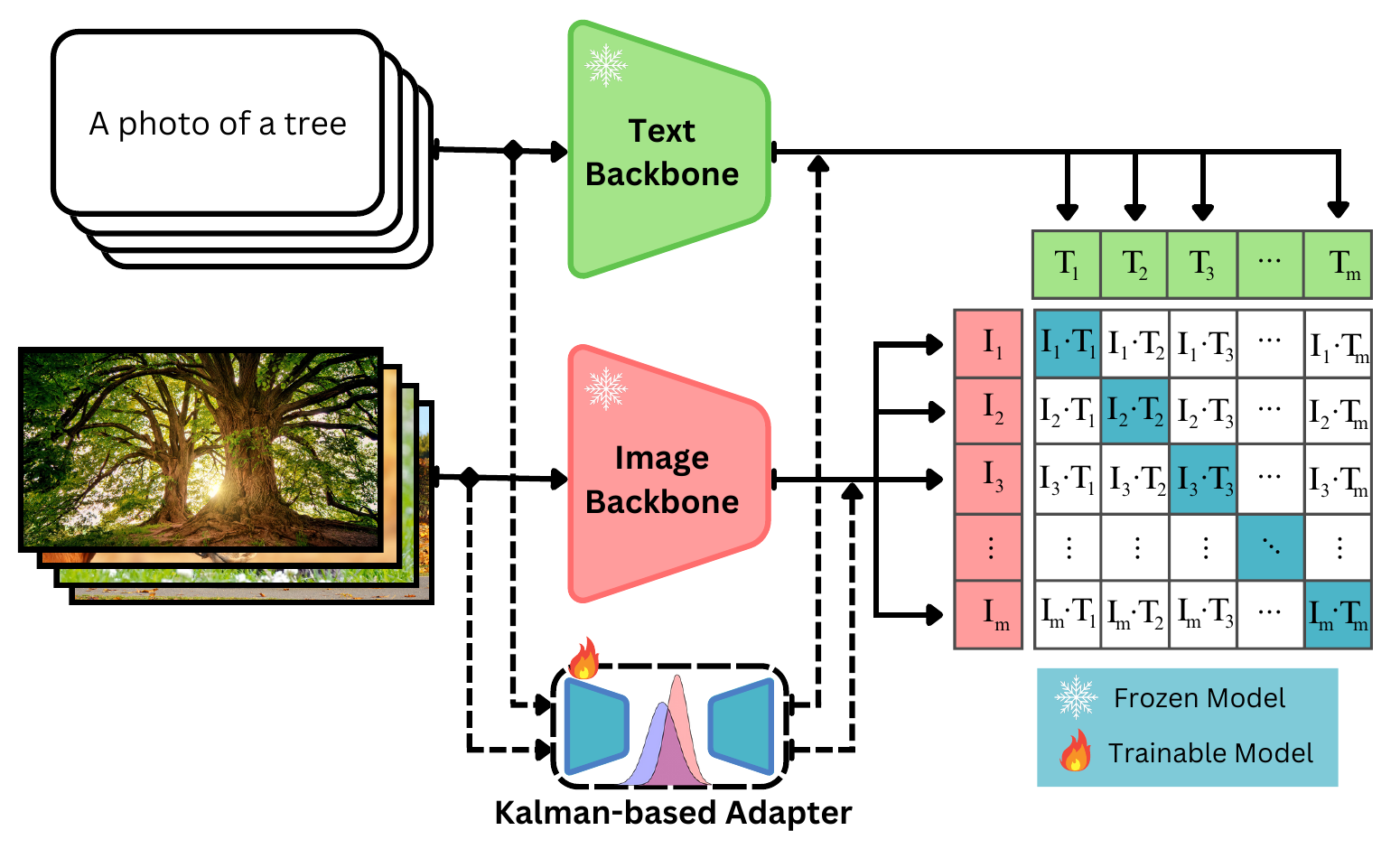}
  \caption{We employ a Kalman-based adapter to fine-tune the CLIP models. Kalman-based optimization algorithm closely approximates the natural gradient direction within a Bayesian framework. While natural gradient facilitates improved ID performance, Bayesian formulation inherently enables uncertainty quantification, which leads to improvement in OOD generalization.}
  \vspace{-0.7cm}
  \label{fig: Adapter}
\end{figure}

CLIP-based fine-tuning approaches have been proposed to improve the In-Distribution (ID) performance and enhance the Out-of-Distribution (OOD) generalization. However, identifying an optimal strategy to balance ID performance with OOD generalization remains an open and critical research question \citep{kumar2022fine}.
Most of the existing methods predominantly rely on first-order gradient-based optimizers such as SGD, Adam, and their variants, which may result in (sub-)optimal convergence in non-convex loss landscapes, particularly under distribution shifts \citep{reddi2019convergence, foret2020sharpness}.

To address these challenges, \emph{second-order} optimization methods have emerged as sophisticated alternatives to first-order approaches. These methods exploit curvature information from the loss manifold, typically via the Hessian matrix (as in Newton’s method or quasi-Newton methods \citep{nocedal1999numerical}) or the Fisher information matrix (as in Natural Gradient Descent \citep{amari1998natural}). By incorporating curvature, second-order optimizers dynamically adjust updates based on the local geometry of the loss landscape. This curvature-aware approach is particularly effective in few-shot learning, where sharp minima in the loss landscape are common. Second-order methods navigate these regions more effectively and boost ID performance \citep{foret2020sharpness, keskar2016large}.

Furthermore, \emph{Bayesian} methods have shown superior effectiveness in managing uncertainty and enhancing the robustness of OOD compared to gradient-based techniques \citep{wilson2020bayesian, ovadia2019can}. These methods are highly effective at capturing model uncertainty and incorporating prior knowledge, which are critical to improving generalization to unseen data. Among Bayesian approaches, Kalman filtering, a step-size-independent and free from gradients algorithm, has shown considerable potential in machine learning optimization. In particular, it can operate as a straightforward second-order optimizer within a Bayesian framework \citep{ollivier2018online}, which is expected to provide effective convergence while maintaining robustness to distributional shifts. 

% The most closely related work to our method is \cite{morales2024bayesadapter}, where a Bayesian approach is employed to improve uncertainty estimation. However, their method lacks the dynamic update mechanism for uncertainty modeling offered by our Kalman filter-based approach, which enables more precise estimation of both in-distribution and out-of-distribution performance.
% Though previous works \cite{morales2024bayesadapter, zhu2023bayesian, fu2024clipscope} use a Bayesian approach to improve OOD generalization, they lack dynamic adaptation to OOD data during the fine-tuning process. %In contrast, our Kalman filter-based approach offers dynamic adaptaion updates, leading to improved in-distribution and out-of-distribution performance estimation.

% Prior works \cite{morales2024bayesadapter, zhu2023bayesian, fu2024clipscope} have used Bayesian approaches to improve OOD generalization, but they lack dynamic adaptation to OOD data during fine-tuning.

Building on these foundations, we propose a robust fine-tuning algorithm based on Bayesian inference with the Kalman filtering method to simultaneously improve the ID performance and enhance the OOD robustness. 
% 
% In our approach, we leverage the low-rank decomposition technique introduced in LoRA \citep{hu2021lora}, and cast fine-tuning as an optimal-filtering or state-estimation problem to robustly estimate the optimal trainable parameters.
% 
Our contributions can be summarized as follows:
% \vspace{-0.3cm}
\begin{itemize}
    \item We introduce a robust adapter for fine-tuning CLIP-based vision-language models by leveraging Bayesian inference with Kalman filtering. This approach achieves consistent improvements in ID performance while improving the generalization of OOD.
    \item To the best of our knowledge, this work represents the first successful application of the Kalman filter algorithm for fine-tuning CLIP-based vision-language models. Our method delivers robust and effective performance across both ID and OOD scenarios.
    % \item We propose an exponential moving average (EMA) technique to estimate the observation noise covariance matrix more accurately. This enables the computation of the Mahalanobis distance between the observed data and the dataset distribution. This value is then utilized to regulate the update step in the Kalman algorithm, thereby improving robustness against OOD data.
    \item We conducted extensive experiments in multimodal classification tasks to demonstrate that our algorithm consistently achieves superior ID performance and OOD generalization compared to existing baselines.
    
\end{itemize}

\section{Related Work}
\label{related_work}

\paragraph{Few-Shot Fine-Tuning Methods for CLIP}

To overcome the limitations of zero-shot learning, numerous studies have explored few-shot learning with CLIP. CoOp (Context Optimization) \citep{zhou2022learning} introduces learnable vectors as text prompts to fine-tune CLIP for specific tasks. The CLIP-Adapter \citep{gao2024clip} enhances the performance by employing an adapter-based fine-tuning approach, which adapts the model to downstream tasks with minimal task-specific data. Tip-Adapter-F \citep{zhang2022tipadaptertrainingfreeadaptionclip} utilizes a key-value cache model from a few-shot training set to update CLIP's prior knowledge through feature retrieval. ClipFit \citep{li2024visionlanguagemodelfinetuningsimple} fine-tunes specific parameters, such as bias terms and normalization layers, to improve zero-shot performance while avoiding catastrophic forgetting. SAFT (Sparse Adaptation for Fine-Tuning) \citep{nguyen2024saftoutofdistributiongeneralizationfinetuning} updates only a small subset of parameters with large gradient magnitudes, which preserves general knowledge and improves OOD generalization. SAFE \citep{zhu2024enhancing} fine-tunes the attention pooling layer of CLIP's visual encoder to improve performance in few-shot scenarios by focusing on task-specific semantics. The mask-aware fine-tuning method (MAFT) \citep{jiao2023learningmaskawarecliprepresentations} addresses CLIP's insensitivity to mask proposals in zero-shot segmentation by fine-tuning a specialized encoder while maintaining transferability. Domain-Aligned CLIP (DAC) \citep{gondal2024domain} enhances both intra-modal and inter-modal alignment without modifying CLIP's parameters, while GLCM-Adapter \citep{wang2024glcm} improves few-shot learning by considering both global and local views of the input image for more robust recognition.
Despite these advancements, the development of a fine-tuning method to balance ID performance with OOD generalization remains an open research area.

\begin{algorithm}[t]
\caption{Our Kalman-based Algorithm}
\label{alg:algorithm1}
\begin{algorithmic}[1]
\State \textbf{Initialization:}\\
    $p(\vtheta_0) = \gN(\vmu_0,\mSigma_0)$,   $\mR_{0} = \mO_{m \times m} + \epsilon \mI$\\
    \For{$k = 1, 2, ... $}
        \State \textbf{Prediction:}
        \State $\vmu_{k|k-1} = \vmu_{k-1}$
        \State $\mSigma_{k|k-1} = \mSigma_{k-1} + \mQ_{k}$
        \State
        \State \textbf{Pre-Updating:}
        \State $\hat{\vy}_k = h(\B_{k}, \vmu_{k \mid k-1})$
        \State $\boldsymbol{H}_k = \nabla_{\vtheta}h|_{(\B_{k}, \vmu_{k \mid k-1})}$
        \State $d_M = \sqrt {\left(\vy_k - \hat{\vy}_k\right) \boldsymbol{R}_{k-1}^{-1} \left(\vy_k - \hat{\vy}_k\right)^{\top}}$
        \State $\lambda = \mathbf{e}^{-\alpha d_M}$
        \State $\hat{\mR}_k = \left(\vy_k - \hat{\vy}_k \right)\left(\vy_k - \hat{\vy}_k \right)^{\top} + \mH_k \mSigma_{k|k-1} \mH_k^{\top}$
        \State $\boldsymbol{R}_k = \beta \boldsymbol{R}_{k-1} + \lambda (1-\beta) \hat{\boldsymbol{R}}_k $
        \State
        \State \textbf{Updating:}
        \State $ \mK_k = \mSigma_{k|k-1} \mH_k^{\top} \left(\mH_k \mSigma_{k|k-1} \mH_k^{\top} + \mR_k\right)^{-1}$
        \State $\vmu_{k} = \vmu_{k|k-1} + \lambda \mK_k (\vy_k - \hat{\vy}_k)$
        \State  $\mSigma_{k} = \mSigma_{k|k-1} - \mK_k \mH_k \mSigma_{k|k-1}$
        \State
        \State \textbf{Output:}
        \State Posterior: $p(\vtheta_k \mid \B_{1:k}) = \gN(\vmu_k,\mSigma_k)$
    \EndFor
\end{algorithmic}
\end{algorithm}

\paragraph{Kalman Filter for Optimizing Neural Networks}
The concept of employing the Kalman filter for parameter optimization in deep learning originates from the work of Singhal \citep{singhal1988training}, who demonstrated that training neural networks can be framed as a system identification problem for nonlinear dynamic systems. This insight led to the use of the Extended Kalman Filter (EKF) to train neural network parameters. The superior performance of Kalman-based training algorithms over traditional backpropagation methods sparked significant interest in exploring the connections between these two classical approaches \citep{ruck1992comparative, ollivier2018online}. 
To enhance the applicability of the Kalman filter to large-scale models, several studies have focused on reducing its computational complexity. A notable approach involves the use of matrix partitioning techniques \citep{shah1992optimal, puskorius1991decoupled} and a low-dimensional (block-)diagonal approximation of the covariance matrix \citep{murtuza1994node}. 
More recently, Ollivier in \citep{ollivier2018online, ollivier2019extended} established that training with a Kalman filter is equivalent to a second-order optimizer within a Bayesian framework.
This finding renewed interest in this training method once again. 
Subsequent studies have further addressed computational challenges, such as the diagonal Gaussian approximation introduced in \citep{chang2022diagonal, abdi2024loko} and the low-rank plus diagonal decomposition of the posterior precision matrix proposed in \citep{chang2023low}. Furthermore, \citep{hennig2024computation} developed a matrix-free iterative algorithm to improve efficiency, while Gomez in \citep{gomez2021decoupled} introduced a decoupled EKF (DEKF) for factorization models. 
The Kalman filter has also found applications in specialized domains, including continual learning \citep{titsias2023kalman}, test-time adaptation \citep{schirmer2024test}, and reinforcement learning \citep{shashua2020kalman, totaro2021fast, shih2024fast}. Other notable advancements include loss-adaptive Kalman optimization \citep{davtyan2022koala}, the Bayesian online natural gradient \citep{jones2024bayesian}, and methods for handling non-stationary data in online learning \citep{jones2022learning, jones2022neural}.  
Despite these advancements, the application of the Kalman algorithm to CLIP-based vision-language models remains unexplored, which presents a significant gap in the literature.

\section{Preliminaries and Background}
\label{sec:Preliminaries and Background}

\subsection{CLIP-based Vision-Language Models}

CLIP (Contrastive Language–Image Pretraining) is a vision-language model developed by OpenAI that learns to associate images and texts through a contrastive learning framework.
Consider a dataset of vision-language pairs in the form of $(image,text)$ which is drawn i.i.d from the source distribution $p_{s}$, $\D_s = \{\vx_j=(\vx^{\vi}_j, \vx^{\vt}_j) \sim p_{s}(\vx) \}_{j=1}^{N_s}$.
The objective is to train a model capable of aligning semantically corresponding image-text pairs in a shared latent space while ensuring that non-matching pairs are distinctly separated. To this end, two encoders $h_{image}$ and $h_{text}$ are reparameterized by $\vtheta = \text{vec}(\vtheta^{\vi} ,\vtheta^{\vt}) \in \R^n$ to project images and text, respectively, into a $d-$dimensional embedding space. The resulting embeddings $\vI_j=h_{image}(\vx^{\vi}_j, \vtheta^{\vi})$ and $\vT_j=h_{text}(\vx^{\vt}_j, \vtheta^{\vt})$ are optimized such that their similarity reflects the semantic correspondence between the paired image and the text $(\vx^{\vi}_j, \vx^{\vt}_j)$. This is achieved by maximizing the diagonal entries of the cosine similarity matrix $\mS_C(\vI_j , \vT_{j'}) \text{ for } j=j'$ and minimizing its off-diagonal entries $\mS_C(\vI_j , \vT_{j'}) \text{ for } j \neq j'$ over a batch of data.
Formally, the loss function is defined as:
\begin{equation} \label{Eq. CLIP loss}
    \begin{split}
        \mathcal{L}_{\text{CLIP}} = -\frac{1}{m} \sum_{i=1}^m \Bigg[ & \log \frac{\exp(\mS_C(\vI_i , \vT_i) / \tau)}{\sum_{j=1}^m \exp(\mS_C(\vI_i , \vT_j) / \tau)} \\
        & + \log \frac{\exp(\mS_C(\vI_i , \vT_i) / \tau)}{\sum_{j=1}^m \exp(\mS_C(\vI_j , \vT_{i}) / \tau)} \Bigg],
    \end{split}
\end{equation}
where the cosine similarity matrix is given by $\mS_C(\vI , \vT) = \frac{\vI \cdot \vT}{\|\vI\|\|\vT\|}$, and $\tau$ represents the temperature parameter.

\subsection{Natural Gradient Descent}

For the negative log loss function $\mathcal{L}_{CLIP}$, a second-order Taylor expansion around the parameters $\vtheta$ yields:
\begin{equation}
  \label{eq:lossdelta}
  \mathcal{L}_{CLIP}(\vtheta + \vdelta) \simeq
  \mathcal{L}_{CLIP}(\vtheta) + \nabla_{\vtheta} \mathcal{L}_{CLIP}(\vtheta)^\top \vdelta + \frac{1}{2} \vdelta^{\top} \mF(\vtheta) \vdelta, 
\end{equation} 
where $\vdelta$ represents the parameter update direction, and the curvature of the loss landscape is characterized by the Fisher information matrix $\mF(\vtheta)$, defined as \cite{pascanu2013revisiting}:
\begin{equation}
  \label{eq:fisher}
  \mF(\vtheta) = \E[ \nabla_{\vtheta} \mathcal{L}_{CLIP} \cdot \nabla_{\vtheta} \mathcal{L}_{CLIP}^{\top} ].
\end{equation}
Minimizing this local quadratic approximation leads to the natural gradient update: $\vdelta^{\ast} = -\mF(\vtheta)^{-1} \nabla_{\vtheta} \mathcal{L}_{CLIP}$.

\subsection{Bayesian Approach}

The Bayesian framework treats the parameters $\vtheta$ as random variables and seeks to infer their posterior distribution given the dataset $\D$ using Bayes’ rule \cite{murphy2023probabilistic, murphy2012machine}: 
\begin{equation}
  \label{eq:bayes}
  p(\vtheta \mid \D) = \frac{p(\D \mid \vtheta) p(\vtheta)}{p(\D)} = \frac{p(\D \mid \vtheta) p(\vtheta)}{ \int p(\D \mid \vtheta') p(\vtheta') d\vtheta' },
\end{equation}
where $p(\D \mid \vtheta)$ is the likelihood, $p(\vtheta)$ represents the prior distribution over parameters, and $p(\D)$ denotes the evidence (or marginal likelihood).
Once the posterior $p(\vtheta \mid \D)$ is obtained, predictions for a new input can be made by: 
\begin{equation}
  \label{eq:bayes_prediction}
  p(\vy' \mid \vx', \D) = \int p(\vy' \mid \vx', \vtheta) \, p(\vtheta \mid \D) d\vtheta.
\end{equation}
This process, known as exact Bayesian inference, is generally infeasible for high-dimensional models due to the intractability of computing integrals over the full parameter space. As a result, approximate inference methods are typically used to estimate the posterior and make Bayesian inference computationally practical.

\section{Methodology}
\label{sec:Methodology}

\subsection{Problem Setting}
\label{sec:Problem Setting}
We consider a CLIP model pre-trained on a source ID dataset $\D_s$, where the model is reparameterized as $\vtheta = \text{vec}(\vtheta^{\vi} ,\vtheta^{\vt}) \in \R^n$, while keeping the backbone frozen during subsequent adaptation (Figure \ref{fig: Adapter}).
% 
% Using the low-rank decomposition technique, specific layers within the text and image encoders are reparameterized. The modified model can be trained with a reduced set of parameters $\vtheta \in \R^{n}$, while the remaining parameters are frozen (Figure \ref{fig: Adapter}).
% 
The goal is fine-tuning the trainable parameter $\vtheta$ using a target dataset $\D_t = \{\vx_j=(\vx^{\vi}_j, \vx^{\vt}_j) \sim p_{t}(\vx) \}_{j=1}^{N_t}$, where $p_{t}(\vx) = p_{s}(\vx)$ for ID and $p_{t}(\vx) \neq p_{s}(\vx)$ for the OOD target dataset. We also consider $\B_k= \{(\vx^{\vi}_j, \vx^{\vt}_j) \}_{j=1}^{m} \subseteq \D_t$ as a minibatch of pairs of $(\vx^{\vi}_j, \vx^{\vt}_j)$ at the step of $k$ with the size of $m$.

\begin{figure*}[t]
\centerline{\includegraphics[width=\textwidth]{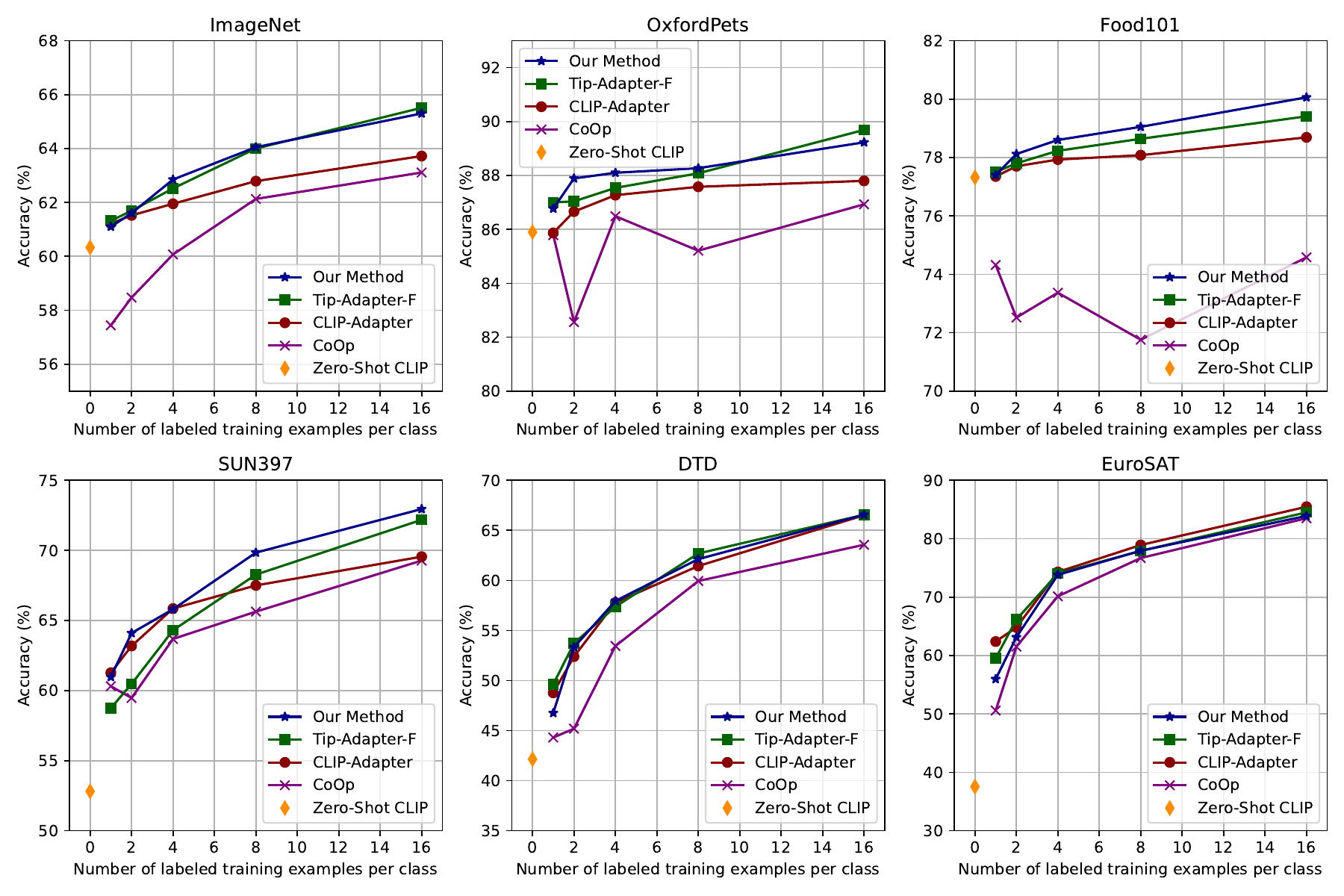}}
\caption{Accuracy results of different few-shot fine-tuning scenarios for six image classification datasets. Our method (blue) consistently achieves superior ID performance in every few-shot setup, and in certain cases, performs comparably to the baselines: Tip-Adapter-F (green), CLIP-Adapter (red), CoOp (purple), and Zero-Shot CLIP (orange).}
\label{fig: main result}
\end{figure*}

\subsection{Algorithm}
\label{sec:Algorithm}

% We apply the low-rank decomposition technique to specific layers of the CLIP model to reparameterize it based on a set of trainable parameters while keeping both the image and text backbones frozen.
In line with the standard Kalman filtering approach, we begin by specifying a Gaussian initial prior distribution over the trainable parameters: $p(\vtheta_0) = \gN(\vmu_0,\mSigma_0)$. Our goal is to recursively estimate the posterior distribution of the parameters through Bayesian inference \citep{murphy2023probabilistic, murphy2012machine}:
\begin{equation} \label{Eq. posterior}
    p(\vtheta_k \mid \B_{1:k}) \propto p(\vy_k \mid \B_k , \vtheta_{k-1}) p(\vtheta_{k-1} \mid \B_{1:k-1}),
\end{equation}
where $p(\vtheta_{k-1} \mid \B_{1:k-1})$ represents the prior (i.e., the posterior from the previous step), and $p(\vy_k \mid \B_k , \vtheta_{k-1})$ denotes the likelihood of output $\vy_k$ given the minibatch $\B_k$ and trainable parameters $\vtheta_{k-1}$. This posterior update can be computed recursively for each minibatch $\B_k$ using Kalman filtering.
Consider a Gaussian likelihood function as $p(\vy_k \mid \B_k , \vtheta_{k-1}) = \gN(\vy_k \mid \hat{\vy}_k, \mR_k)$,
% 
% \begin{equation} \label{Eq. likelihood}
%     p(\vy_k \mid \B_k , \vtheta_{k \mid k-1}) = \gN(\vy_k \mid \hat{\vy}_k, \mR_k),
% \end{equation}
% 
% 
where $\mR_k = \verb|Cov|(\vy_k|\hat{\vy}_k)$ represents the covariance matrix of the observation noise. Here, $\vy_k$ denotes the true output, while $\hat{\vy}_k$ corresponds to the estimated output of the model. The estimated output $\hat{\vy}_k$ can be defined as the following equation:
\begin{equation} \label{Eq. model}
    \hat{\vy}_k = h(\B_{k}, \vtheta_k) = \verb|diag| \left(\mS_C(\vI,\vT) \right).
\end{equation}
% 
% \begin{equation} \label{Eq. model}
%     \hat{\vy}_k = h(\B_{k}, \vtheta_k) = \left\{
% \begin{array}{l}
% diag \left(\mS_C(\vI,\vT) \right) \text{  Approach I} \\
% flatten \left(\mS_C(\vI,\vT) \right) \text{  Approach II}
% \end{array}
% \right.
% \end{equation}
% 
Here, the function $\verb|diag|(\cdot)$ extracts the diagonal entries of the matrix $\mS_C$.
To maximize the diagonal entries of the cosine similarity matrix $\mS_C$, the true output of the model, $\vy_k$, is defined as follows:
\begin{equation} \label{Eq. model}
    \vy_k = \verb|diag|(\mI_{m \times m}).
\end{equation}
% 
% \begin{equation} \label{Eq. model}
%     \vy_k = \left\{
% \begin{array}{l}
% diag(\mI_{m \times m}) \text{  Approach I} \\
% flatten(\mI_{m \times m}) \text{  Approach II}
% \end{array}
% \right.
% \end{equation}
% 
Here, $\mI_{m \times m}$ represents the identity matrix of size $m \times m$.
We also evaluated the alternative formulation $\vy_k = \verb|flatten|(\mI_{m \times m})$, where the estimated output is given by $\hat{\vy}_k = h(\B_{k}, \vtheta_k) = \verb|flatten| \left(\mS_C(\vI,\vT) \right)$. Here, the function $\verb|flatten|(\cdot)$ reshapes the matrix $\mS_C$ and $\mI$ of size $m \times m$ into a vector of size $1 \times (m \times m)$. However, empirical results indicate that the approach employing the $\verb|diag|(\cdot)$ function yields superior performance. Consequently, we omit further consideration of the $\verb|flatten|(\cdot)$-based approach.
% 
% Note that we also tested the $\vy_k = flatten(\mI_{m \times m})$ for the estimated output $\hat{\vy}_k = h(\B_{k}, \vtheta_k) = flatten \left(\mS_C(\vI,\vT) \right)$, where the function $flatten(\cdot)$ reshapes the matrix $\mS_C$, of size $m \times m$, into a vector of size $1 \times (m \times m)$. However, we noticed that the approach with $diag(\cdot)$ function performs better, so we skip the $flatten(\cdot)$ approach.

Furthermore, it is worth noting that a more general form of the likelihood function involves a non-Gaussian function from the exponential family, expressed as $p(\vy_k \mid \B_k , \vtheta_{k-1}) = \verb|exp|(T(\vy_k)|\hat{\vy}_k)$, where $\verb|exp|(\cdot)$ denotes the exponential family function, and $T(\cdot)$ represents the sufficient statistics associated with the exponential family. In this context, the observation noise covariance matrix $\mR_k$ is defined as $\mR_k = \verb|Cov|(T(\vy_k)|\hat{\vy}_k)$, which captures the covariance between $T(\vy_k)$ and $\hat{\vy}_k$ \citep{ollivier2018online}.

To estimate the posterior distribution $p(\vtheta_k \mid \B_{1:k}) = \gN(\vmu_k,\mSigma_k)$ from the prior distribution $p(\vtheta_{k-1} \mid \B_{1:k-1}) = \gN(\vmu_{k-1},\mSigma_{k-1})$ based on Bayesian inference, we follow the Kalman algorithm as described below:

\paragraph{Prediction Step} In accordance with the first Kalman step, the prior is predicted based on the posterior of the previous time step using the following process model:
\begin{equation} \label{Eq. prior}
    p(\vtheta_{k \mid k-1} \mid \vtheta_{k-1}) = \gN(\vtheta_{k \mid k-1} \mid \vtheta_{k-1}, \mQ_{k}) ,
\end{equation}
where we set $\mQ_{k} = q\mI$ with $q \geq 0$.
The prediction step is given by:
\begin{subequations} \label{Eq. LoKA Prediction}
        \begin{align} 
            \vmu_{k|k-1} &= \vmu_{k-1}
            \label{Eq. LoKO Prediction State}\\
            \mSigma_{k|k-1} &= \mSigma_{k-1} + \mQ_{k}
            \label{Eq. LoKA Prediction Covariance}
        \end{align}
\end{subequations}
\paragraph{Updating Step} In the second step of the Kalman filter, the predicted prior is updated to estimate the posterior as follows:
\begin{subequations} \label{Eq. Algorithm Updating}
    \begin{align} 
        \mK_k &= \mSigma_{k|k-1} \mH_k^{\top} \left(\mH_k \mSigma_{k|k-1} \mH_k^{\top} + \mR_k\right)^{-1}
        \label{Eq. Algorithm diag gain update}\\
        \vmu_{k} &= \vmu_{k|k-1} + \mK_k \left(\vy_k - h(\B_{k}, \vmu_{k|k-1}) \right)
        \label{Eq. Algorithm Updating Process}\\
        \mSigma_{k} &= \mSigma_{k|k-1} - \mK_k \mH_k \mSigma_{k|k-1}
        \label{Eq. Algorithm diag cov update}
    \end{align}
\end{subequations}
% 
% \begin{subequations} \label{Eq. Algorithm Updating}
%     \begin{align} 
%         \mK_k &= \hat{\vp}_{k|k-1} \bullet \mH_k^{\top} \left(\mH_k (\hat{\vp}_{k|k-1} \bullet \mH_k^{\top}) + \mR_k\right)^{-1}
%         \label{Eq. Algorithm diag gain update}\\
%         \vmu_{k} &= \vmu_{k|k-1} + \mK_k \left(\vy_k - h(\B_{k}, \vmu_{k|k-1})\right)
%         \label{Eq. Algorithm Updating Process}\\
%         \left(\hat{\vp}_{k}\right)^i &= \left(\hat{\vp}_{k|k-1}\right)^i - \left(\mK_k\right)^{i}_{j} \left(\mH_k\right)^{j}_{i} \left(\hat{\vp}_{k|k-1}\right)^i
%         \label{Eq. Algorithm diag cov update}
%     \end{align}
% \end{subequations}
% 
Here, $\mK_k$ is the Kalman gain, and $\mH_k$ indicates the Jacobian matrix of the function $h(\B, \vtheta)$ with respect to the parameters $\vtheta$ at the point of $(\B_{k}, \vmu_{k|k-1})$.

\subsection{Kalman as NGD}
\label{sec: Kalman as NGD}

We now demonstrate that the Kalman algorithm equivalently updates the trainable parameters along the natural gradient direction.
When minimizing $\mathcal{L}_{\text{CLIP}}$, the CLIP model seeks parameter values $\vtheta$ such that the predictive likelihood $p(\vy|\B, \vtheta)$ closely approximates the target conditional data distribution $p_t(\vy|\B)$. This corresponds to minimizing the Kullback–Leibler divergence $D_{KL}(p_t(\vy|\B)||p(\vy|\B, \vtheta))$.
Given the Gaussian likelihood, it can be equivalent to minimizing: $\mathcal{L}_{\text{CLIP}} \equiv \frac{1}{2}(\hat{\vy}_k-\vy_k)^{\top} \mR_{k}^{-1} (\hat{\vy}_k-\vy_k)$ \cite{goodfellow2016deep, murphy2012machine, bishop2006pattern}. Now, we can define the Lemma \ref{lem:modified_kamlan_update}.

% This is typically achieved by minimizing the Kullback–Leibler (KL) divergence between the target conditional distribution and the likelihood of prediction, $D_{KL}(p_t(\vy|\B)||p(\vy|\B, \vtheta))$. Since $p_t(\vy|\B)$ is not available, it can be shown that this objective is equivalent to minimizing the negative log-likelihood loss: $\mathcal{L} = -\E_{p^*(\vx,\vy)}[\ln p(\vy|\vx,\vtheta)]$ \citep{goodfellow2016deep, murphy2012machine, bishop2006pattern}. From a Bayesian perspective, when the likelihood is Gaussian-or more generally, belongs to the exponential family-this loss function can be equivalently expressed for a single data point $(\vx_k, \vy_k)$ as: $\mathcal{L}_k = \frac{1}{2}(\hat{\vy}_k-\vy_k)^{\top} \mR_{k}^{-1} (\hat{\vy}_k-\vy_k)$. This formulation allows the Kalman update step to be reformulated in terms of $\mathcal{L}_k$, and opens the possibility of establishing a connection between Fisher and Kalman.

% Given the negative , we use the well-known lemma for the Kalman filter introduced in \cite{????}.
% 
\begin{lemma}
    \label{lem:modified_kamlan_update}
    The update step of the standard Kalman algorithm, as presented in Equation \eqref{Eq. Algorithm Updating}, can be reformulated as: 
    \begin{subequations} 
    \begin{align} 
        \vmu_{k} &= \vmu_{k|k-1} - \mSigma_{k} \nabla_{\vtheta} \mathcal{L}_{\text{CLIP}}
        \label{eq:posterior_mean_update} \\
        \mSigma_{k}^{-1} &= \mSigma_{k|k-1}^{-1} + \mH_k^{\top} \mR_{k}^{-1} \mH_k \label{eq:inv_cov_update} 
    \end{align}
    \end{subequations}
\end{lemma}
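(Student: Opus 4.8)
The plan is to recognize that Equations~\eqref{eq:posterior_mean_update}--\eqref{eq:inv_cov_update} are exactly the information (precision) form of the Kalman update, and to recover them from the covariance form in Equation~\eqref{Eq. Algorithm Updating} by two applications of the Sherman--Morrison--Woodbury identity together with one chain-rule computation. Throughout I would keep $\mR_k$ fixed (as the algorithm does), note that $\mSigma_{k|k-1}$ is positive definite because $\mQ_k = q\mI$, and use that $\hat{\vy}_k = h(\B_k,\vmu_{k|k-1})$ with $\mH_k = \nabla_{\vtheta}h|_{(\B_k,\vmu_{k|k-1})}$.

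\textbf{Step 1 (precision update).} First I would substitute the Kalman gain \eqref{Eq. Algorithm diag gain update} into the covariance recursion \eqref{Eq. Algorithm diag cov update}, obtaining
\[
\mSigma_k = \mSigma_{k|k-1} - \mSigma_{k|k-1}\mH_k^{\top}\bigl(\mH_k\mSigma_{k|k-1}\mH_k^{\top}+\mR_k\bigr)^{-1}\mH_k\mSigma_{k|k-1}.
\]
The Woodbury identity collapses the right-hand side to $\bigl(\mSigma_{k|k-1}^{-1}+\mH_k^{\top}\mR_k^{-1}\mH_k\bigr)^{-1}$; inverting both sides gives \eqref{eq:inv_cov_update}, and in particular shows that $\mSigma_k$ remains positive definite.

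\textbf{Step 2 (gain in posterior-covariance form).} Next I would establish the standard identity $\mK_k = \mSigma_k\mH_k^{\top}\mR_k^{-1}$. This follows by left-multiplying the claimed equality by $\mSigma_k^{-1}$ and right-multiplying by $\bigl(\mH_k\mSigma_{k|k-1}\mH_k^{\top}+\mR_k\bigr)$: using \eqref{eq:inv_cov_update} from Step~1, both sides reduce to $\mH_k^{\top}+\mH_k^{\top}\mR_k^{-1}\mH_k\mSigma_{k|k-1}\mH_k^{\top}$. Substituting $\mK_k = \mSigma_k\mH_k^{\top}\mR_k^{-1}$ into the mean recursion \eqref{Eq. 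Algorithm Updating Process} yields $\vmu_k = \vmu_{k|k-1} + \mSigma_k\mH_k^{\top}\mR_k^{-1}(\vy_k-\hat{\vy}_k)$.

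\textbf{Step 3 (identify the natural-gradient form) and obstacle.} Finally I would differentiate the Gaussian surrogate $\mathcal{L}_{\text{CLIP}} \equiv \tfrac12(\hat{\vy}_k-\vy_k)^{\top}\mR_k^{-1}(\hat{\vy}_k-\vy_k)$ at $\vmu_{k|k-1}$; since $\partial\hat{\vy}_k/\partial\vtheta = \mH_k$ and $\mR_k$ is held fixed, the chain rule gives $\nabla_{\vtheta}\mathcal{L}_{\text{CLIP}} = \mH_k^{\top}\mR_k^{-1}(\hat{\vy}_k-\vy_k)$, so the update of Step~2 becomes $\vmu_k = \vmu_{k|k-1} - \mSigma_k\nabla_{\vtheta}\mathcal{L}_{\text{CLIP}}$, which is \eqref{eq:posterior_mean_update}. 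The two Woodbury manipulations and the chain rule are routine; the only place demanding care is the algebra of Step~2, where the innovation-covariance form of the gain must be converted to the posterior-covariance form, but this is immediate once the precision identity of Step~1 is in hand, so I expect only bookkeeping rather than a genuine obstacle.
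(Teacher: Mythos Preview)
Your proposal is correct and follows essentially the same route as the paper: Woodbury for the precision update~\eqref{eq:inv_cov_update}, an algebraic identity expressing the Kalman gain as $\mK_k=\mSigma_k\mH_k^{\top}\mR_k^{-1}$ (the paper writes it as $\mK_k\mR_k=\mSigma_k\mH_k^{\top}$), and the chain rule for $\nabla_{\vtheta}\mathcal{L}_{\text{CLIP}}$. Your ordering---establishing the precision identity first and then verifying the gain identity by multiplication---is in fact slightly cleaner than the paper's, which passes through the intermediate expression $(\vy_k-\hat{\vy}_k)=-\mR_k(\mH_k^{\top})^{-1}\nabla_{\vtheta}\mathcal{L}_{\text{CLIP}}$ and a subsequent cancellation of $\mH_k^{\top}(\mH_k^{\top})^{-1}$, tacitly treating the non-square Jacobian as invertible; your argument avoids that wrinkle entirely.
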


% 
% \begin{proof}
%     It can be derived by expanding the definition of Kalman gain and applying the Woodbury matrix identity. For more details, refer to \cite{ollivier2019extended, ollivier2018online}.
% \end{proof}
% 

\begin{proof}
    To derive Equation \eqref{eq:posterior_mean_update}, we begin by computing the gradient of the loss with respect to the parameters using the chain rule: $\nabla_{\vtheta} \mathcal{L}_{\text{CLIP}} = \nabla_{\vtheta}^{\top} \hat{\vy} \nabla_{\hat{\vy}} \mathcal{L}_{\text{CLIP}}$. We know that $\nabla_{\vtheta} \hat{\vy} = \mH_k$, and $\nabla_{\hat{\vy}} \mathcal{L}_{\text{CLIP}} = \mR_k^{-1} (\hat{\vy}_k-\vy_k)$. Therefore, the gradient simplifies to $\nabla_{\vtheta} \mathcal{L}_{\text{CLIP}} = \mH_k^{\top} \mR_k^{-1} (\hat{\vy}_k-\vy_k)$. By rearranging terms, we obtain the prediction error as $ (\vy_k-\hat{\vy}_k) = - \mR_k {\mH_k^{\top}}^{-1} \nabla_{\vtheta} \mathcal{L}_{\text{CLIP}}$. Substituting this expression into the Kalman parameter update from Equation \eqref{Eq. Algorithm Updating Process}, we get: $ \vmu_{k} = \vmu_{k|k-1} - \mK_k \mR_k {\mH_k^{\top}}^{-1} \nabla_{\vtheta} \mathcal{L}_{\text{CLIP}}$.
    Next, we aim to simplify the term $\mK_k \mR_k$ by expressing it as $\mSigma_k\mH_k^{\top}$. To do this, we use the identity $\mK_k \mR_k = \mK_k(\mR_k +\mH_k \mSigma_{k|k-1} \mH_k^{\top}) - \mK_k \mH_k \mSigma_{k|k-1} \mH_k^{\top}$, and by applying Equation \eqref{Eq. Algorithm diag gain update}, we know that: $\mK_k(\mR_k +\mH_k \mSigma_{k|k-1} \mH_k^{\top}) = \mSigma_{k|k-1} \mH_k^{\top} $.
    So: $\mK_k \mR_k = \mSigma_{k|k-1} \mH_k^{\top} - \mK_k \mH_k \mSigma_{k|k-1} \mH_k^{\top}$, which simplifies to $\mK_k \mR_k = \big( \mSigma_{k|k-1} - \mK_k \mH_k \mSigma_{k|k-1} \big) \mH_k^{\top}$.
    Using the covariance update formula from Equation \eqref{Eq. Algorithm diag cov update}, we know that $\big( \mSigma_{k|k-1} - \mK_k \mH_k \mSigma_{k|k-1} \big) = \mSigma_{k}$, and therefore: $\mK_k \mR_k = \mSigma_{k} \mH_k^{\top}$. Substituting this back into the expression for $\vmu_k$, we obtain: $\vmu_{k} = \vmu_{k|k-1} - \mSigma_{k} \mH_k^{\top} {\mH_k^{\top}}^{-1} \nabla_{\vtheta} \mathcal{L}_{\text{CLIP}}$. 
    This simplifies further to $\vmu_{k} = \vmu_{k|k-1} - \mSigma_{k} \nabla_{\vtheta} \mathcal{L}_{\text{CLIP}}$, which confirms Equation \eqref{eq:posterior_mean_update}.

    To derive Equation \eqref{eq:inv_cov_update}, we start by substituting the Kalman gain from Equation \eqref{Eq. Algorithm diag gain update} into the covariance update expression in Equation \eqref{Eq. Algorithm diag cov update}, which gives: $\mSigma_{k} = \mSigma_{k|k-1} - \mSigma_{k|k-1} \mH_k^{\top} \left(\mH_k \mSigma_{k|k-1} \mH_k^{\top} + \mR_k\right)^{-1} \mH_k \mSigma_{k|k-1}$.
    To rewrite this in inverse form, we use the Woodbury matrix identity, which states that: $(\mA + \mU\mC\mV)^{-1} = \mA^{-1}-\mA^{-1}\mU(\mC^{-1}+\mV\mA^{-1}\mU)^{-1}\mV\mA^{-1}$. By identifying the terms as $\mA = \mSigma^{-1}_{k|k-1}$, $\mU = \mH_k^{\top}$, $\mC = \mR_k^{-1}$, and $\mV = \mH_k$, we apply the identity to obtain the inverse form of the covariance update as: $\mSigma^{-1}_{k} = \mSigma_{k|k-1}^{-1} + \mH_k^{\top} \mR_{k}^{-1} \mH_k$. 
\end{proof}
This lemma plays an important role in demonstrating the equivalence between the Kalman update and the natural gradient direction, as shown in Proposition \ref{theorem:equivalent_fisher_for_kalman}.
\begin{proposition}
    \label{theorem:equivalent_fisher_for_kalman}
    Assuming a Gaussian (or more broadly, exponential family) likelihood, the Kalman update step in Equation \eqref{Eq. Algorithm Updating Process} provides a close approximation to the natural gradient direction, given by $\vdelta^{\ast} = -\mF(\vtheta_k)^{-1} \nabla_{\vtheta} \mathcal{L}_{\text{CLIP}_k}$.
\end{proposition}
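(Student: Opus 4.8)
The plan is to combine Lemma~\ref{lem:modified_kamlan_update} with the observation that the outer-product Fisher information matrix is, up to the minibatch approximation, exactly the term $\mH_k^{\top}\mR_k^{-1}\mH_k$ that appears in the posterior precision recursion. First I would write out the Fisher matrix \eqref{eq:fisher} using the chain-rule form of the gradient established in the proof of Lemma~\ref{lem:modified_kamlan_update}, namely $\nabla_{\vtheta}\mathcal{L}_{\text{CLIP}} = \mH_k^{\top}\mR_k^{-1}(\hat{\vy}_k-\vy_k)$, so that
\begin{equation*}
  \mF(\vtheta_k) = \mH_k^{\top}\mR_k^{-1}\,\E\!\left[(\hat{\vy}_k-\vy_k)(\hat{\vy}_k-\vy_k)^{\top}\right]\mR_k^{-1}\mH_k.
\end{equation*}
Using the Gaussian (or exponential-family) likelihood, the inner expectation is precisely the observation-noise covariance $\mR_k = \verb|Cov|(\vy_k\mid\hat{\vy}_k)$, which collapses the expression to $\mF(\vtheta_k) = \mH_k^{\top}\mR_k^{-1}\mH_k$. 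This is the single ``model-matches-data'' identification step and I would flag it explicitly as the place where the exponential-family assumption is used.

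Next I would feed this into Lemma~\ref{lem:modified_kamlan_update}. Equation~\eqref{eq:inv_cov_update} then reads $\mSigma_k^{-1} = \mSigma_{k|k-1}^{-1} + \mF(\vtheta_k)$, i.e.\ the posterior precision is the predicted-prior precision plus one ``batch'' of Fisher information. Substituting into the mean update \eqref{eq:posterior_mean_update} gives
\begin{equation*}
  \vmu_k = \vmu_{k|k-1} - \left(\mSigma_{k|k-1}^{-1} + \mF(\vtheta_k)\right)^{-1}\nabla_{\vtheta}\mathcal{L}_{\text{CLIP}_k},
\end{equation*}
which I would then compare term-by-term with the exact natural gradient step $\vmu_k = \vmu_{k|k-1} - \mF(\vtheta_k)^{-1}\nabla_{\vtheta}\mathcal{L}_{\text{CLIP}_k}$. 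The discrepancy is entirely the extra $\mSigma_{k|k-1}^{-1}$ inside the inverse, so I would argue that whenever the prior is diffuse relative to the accumulated curvature — either $\mQ_k$ and $\mSigma_0$ large, or after enough steps so that $\mSigma_{k|k-1}^{-1}\ll\mF(\vtheta_k)$ in the Loewner order — the Kalman preconditioner $\big(\mSigma_{k|k-1}^{-1}+\mF(\vtheta_k)\big)^{-1}$ converges to $\mF(\vtheta_k)^{-1}$, and more generally it acts as a Tikhonov-damped (Levenberg–Marquardt-style) natural gradient, which explains the word ``close approximation'' in the statement. I would also note the interpretation that $\mSigma_{k|k-1}$ itself plays the role of an online running estimate of $\mF^{-1}$ accumulated over past minibatches, so the step is effectively natural-gradient with a streaming Fisher estimate rather than the instantaneous one.

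The main obstacle is making the word ``approximation'' precise without overclaiming: the identity $\mF = \mH_k^{\top}\mR_k^{-1}\mH_k$ is exact only when the model's predictive distribution equals the data-conditional distribution (the KL-minimization premise stated just before the proposition) and only for the single-minibatch Fisher, whereas the true Fisher involves an expectation over the data distribution that the minibatch merely estimates. So I would be careful to present two layers of approximation — (i) minibatch/empirical Fisher vs.\ population Fisher, and (ii) the $\mSigma_{k|k-1}^{-1}$ regularization term — and argue that both vanish in the appropriate limit (large batch / many steps with a diffuse prior), leaving the asymptotic equivalence $\vmu_k - \vmu_{k|k-1} \to -\mF(\vtheta_k)^{-1}\nabla_{\vtheta}\mathcal{L}_{\text{CLIP}_k}$. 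A secondary technical point to handle cleanly is the invertibility of $\mH_k$ implicitly used in the Lemma's proof; since here we only need $\mSigma_k\nabla_{\vtheta}\mathcal{L}_{\text{CLIP}}$ and the precision recursion, I would route around it by working directly with \eqref{eq:posterior_mean_update}–\eqref{eq:inv_cov_update} rather than re-deriving through $\mK_k\mR_k\mH_k^{-\top}$.
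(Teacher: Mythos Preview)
Your proposal is correct and follows essentially the same route as the paper: both derive $\mF(\vtheta_k)=\mH_k^{\top}\mR_k^{-1}\mH_k$ from the chain-rule form of the gradient under the Gaussian/exponential-family assumption (you via $\E[(\hat{\vy}_k-\vy_k)(\hat{\vy}_k-\vy_k)^{\top}]=\mR_k$, the paper via $\nabla_{\hat{\vy}}^{2}\mathcal{L}_{\text{CLIP}_k}=\mR_k^{-1}$, which are the two standard equivalent exponential-family identities) and then invoke Lemma~\ref{lem:modified_kamlan_update} to identify the Kalman mean update with a preconditioned gradient step. You are in fact more careful than the paper about what ``close approximation'' means: the paper simply asserts $\mF\equiv\mSigma^{-1}$ and stops, whereas you explicitly isolate the extra $\mSigma_{k|k-1}^{-1}$ damping term and the minibatch-versus-population Fisher distinction, and your remark about avoiding the $\mH_k^{-\top}$ detour is well taken.
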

\begin{proof}
    We start with the Fisher information matrix as defined in Equation \eqref{eq:fisher}, which can be expressed as: $\mF(\vtheta) = \E[ \nabla_{\vtheta} \mathcal{L}_{\text{CLIP}} \cdot \nabla_{\vtheta} \mathcal{L}_{\text{CLIP}}^{\top} ]$. At time step $k$, the contribution from the current minibatch to the Fisher information matrix is given by $\mF(\vtheta_k) = \nabla_{\vtheta} \mathcal{L}_{\text{CLIP}_k} \cdot \nabla_{\vtheta} \mathcal{L}_{\text{CLIP}_k}^{\top}$. Applying the chain rule yields $\nabla_{\vtheta}\mathcal{L}_{\text{CLIP}_k} = \nabla_{\hat{\vy}}\mathcal{L}_{\text{CLIP}_k} \cdot \nabla_{\vtheta}\hat{\vy}_k$, and thus: $\mF(\vtheta_k) = (\nabla_{\hat{\vy}}\mathcal{L}_{\text{CLIP}_k} \cdot \nabla_{\vtheta}\hat{\vy}_k) \cdot (\nabla_{\hat{\vy}}\mathcal{L}_{\text{CLIP}_k} \cdot \nabla_{\vtheta}\hat{\vy}_k)^{\top}$.
    Using matrix multiplication properties, this can be rewritten as: $\mF(\vtheta_k) = \nabla_{\vtheta}^{\top}\hat{\vy}_k \cdot \nabla_{\hat{\vy}}^2 \mathcal{L}_{\text{CLIP}_k} \cdot \nabla_{\vtheta}\hat{\vy}_k$. Noting that $\mH_k = \nabla_{\vtheta}\hat{\vy}_k$, and assuming a Gaussian (or more generally, exponential family) likelihood, it follows that $\mR_k^{-1} = \nabla_{\hat{\vy}}^2 \mathcal{L}_{\text{CLIP}_k}$. Consequently, the Fisher information matrix can be expressed as $\mF(\vtheta_k) = \mH_k^{\top} \mR_k^{-1} \mH_k$, which corresponds to the update term in the inverse covariance formulation of Equation \eqref{eq:inv_cov_update} in Lemma \ref{lem:modified_kamlan_update}, i.e. $\mF \equiv \mSigma^{-1}$. Therefore, the natural gradient step $\mF(\vtheta_k)^{-1} \nabla_{\vtheta} \mathcal{L}_{\text{CLIP}_k}$ is equivalent to $ \mSigma_{k} \nabla_{\vtheta} \mathcal{L}_{\text{CLIP}_k} $.
\end{proof}

This proposition provides a theoretical foundation showing that the Kalman algorithm’s update step aligns with the natural gradient direction, which allows the Kalman filter to perform optimization that resembles the natural gradient descent.

\begin{table*}[t]
  \caption{Accuracy results for the ImageNet, OxfordPets, Food101, SUN397, DTD, and EuroSAT datasets. The table compares performance across 1, 2, 4, 8, and 16-shot fine-tuning scenarios achieved by our method and baseline approaches: Tip-Adapter-F, CLIP-Adapter, CoOp, and Zero-Shot CLIP. Bolded values indicate the best performance in each column.}
  \label{table: main result}
  \centering
  {\small
  \begin{tabular}{lc|c|c|c|c|c|c}
    \toprule \addlinespace[1ex]
    \multicolumn{2}{c|}{\textbf{Method}} & \textbf{ImageNet}  &  \textbf{OxfordPets} & \textbf{Food101} & \textbf{SUN397} & \textbf{DTD} & \textbf{EuroSAT}  \\ \addlinespace[1ex]
    \toprule
    \textbf{Zero-Shot CLIP} &  & \multirow{2}{*}{$60.33$} & \multirow{2}{*}{$85.90$} & \multirow{2}{*}{$77.32$} & \multirow{2}{*}{$52.81$} & \multirow{2}{*}{$42.15$} & \multirow{2}{*}{$37.55$} \\
    \citep{radford2021learning} & & & & & & & \\ \midrule \addlinespace[1ex] 
    \multirow{5}{*}{\textbf{CoOp}} & 1-Shot  & $57.44$ & $85.79$ & $74.32$ & $60.31$ & $44.31$ & $50.57$ \\
    & 2-Shot  & $58.47$ & $82.56$ & $72.52$ & $59.47$ & $45.19$ & $61.52$ \\
    & 4-Shot  & $60.07$ & $86.49$ & $73.37$ & $63.68$ & $53.45$ & $70.14$ \\
    \citep{zhou2022learning} & 8-Shot  & $62.13$ & $85.21$ & $71.76$ & $65.63$ & $59.94$ & $76.69$ \\ 
    & 16-Shot  & $63.11$ & $86.93$ & $74.58$ & $69.27$ & $63.56$ & $83.50$ \\ \midrule \addlinespace[1ex]
    \multirow{5}{*}{\textbf{CLIP-Adapter}} & 1-Shot  & $61.21$ & $85.87$ & $77.35$ & $61.27$ & $48.77$ & $62.39$ \\
    & 2-Shot  & $61.52$ & $86.66$ & $77.70$ & $63.18$ & $52.39$ & $64.81$ \\
    & 4-Shot  & $61.95$ & $87.27$ & $77.93$ & $65.85$ & $57.79$ & $74.30$ \\
    \citep{gao2024clip} & 8-Shot  & $62.79$ & $87.58$ & $78.08$ & $67.50$ & $61.44$ & $78.92$ \\ 
    & 16-Shot  & $63.72$ & $87.80$ & $78.69$ & $69.55$ & $66.51$ & $\textbf{85.47}$ \\ \midrule \addlinespace[1ex]
    \multirow{5}{*}{\textbf{Tip-Adapter-F}} & 1-Shot  & $61.32$ & $87.00$ & $77.50$ & $58.75$ & $49.63$ & $59.53$ \\
    & 2-Shot  & $61.69$ & $87.04$ & $77.80$ & $60.47$ & $53.72$ & $66.17$ \\
    & 4-Shot  & $62.52$ & $87.54$ & $78.23$ & $64.30$ & $57.38$ & $74.04$ \\
    \citep{zhang2022tipadaptertrainingfreeadaptionclip} & 8-Shot  & $64.00$ & $88.08$ & $78.64$ & $68.27$ & $62.69$ & $77.91$ \\ 
    & 16-Shot  & $\textbf{65.51}$ & $\textbf{89.69}$ & $79.41$ & $72.18$ & $66.56$ & $84.51$ \\ \midrule \addlinespace[1ex]
    \multirow{5}{*}{\colorbox{pink}{\textbf{Our Method}}}     & 1-Shot  & $61.10$ & $86.77$ & $77.40$ & $60.97$ & $46.76$ & $55.97$ \\
    & 2-Shot  & $61.60$ & $87.89$ & $78.12$ & $64.10$ & $53.32$ & $63.13$ \\
    & 4-Shot  & $62.85$ & $88.10$ & $78.60$ & $65.80$ & $57.95$ & $73.81$ \\
    & 8-Shot  & $64.05$ & $88.27$ & $79.05$ & $69.84$ & $62.13$ & $77.94$ \\ 
    & 16-Shot  & $65.30$ & $89.23$ & $\textbf{80.06}$ & $\textbf{72.95}$ & $\textbf{66.56}$ & $83.89$ \\ \addlinespace[1ex]

    \bottomrule
  \end{tabular}
  }
\end{table*}

\subsection{Robustness to OOD}
\label{sec:Robustness to OOD}

Inspired by \citep{altamirano2023robust, duran2024outlier}, we dynamically adjust the Kalman filtering update step to further enhance its robustness to OOD data during fine-tuning. To achieve this, we propose two novel methods for estimating the observation noise covariance matrix, $\mR_k$.
To empirically estimate the $\mR_k$, we employ an exponential moving average (EMA) approach that aggregates statistical information from previously observed data: 
\begin{equation} \label{Eq. EMA matrix R}
     \mR_k = \beta \mR_{k-1} + (1-\beta) \hat{\mR}_k,
\end{equation}
where $\beta \in (0,1)$ is the forgetting factor, and $\hat{\mR}_k$ denotes the contribution from the current minibatch, which can be calculated using Taylor series expansion around the point $\vtheta_k \approx \vmu_{k|k-1}$ via:
\paragraph{Method 1} 
The first method relies on the zeroth-order Taylor series expansion. In this method, $\hat{\vy}_k$ is approximated as $h(\B_{k}, \vmu_{k \mid k-1})$, which leads to the following formulation:
\begin{subequations} \label{Eq. matrix R approach1}
\begin{align}
    \hat{\mR}_k &= \left(\vy_k - h(\B_{k}, \vmu_{k \mid k-1}) \right)\left(\vy_k - h(\B_{k}, \vmu_{k \mid k-1}) \right)^{\top}. 
    \label{Eq. matrix R approach1b}
\end{align}
\end{subequations}
\paragraph{Method 2} 
This method uses the first-order Taylor series expansion. In this method, $\hat{\vy}_k$ is approximated as $h(\B_{k}, \vmu_{k \mid k-1}) + \mH_k (\vtheta_{k} - \vmu_{k|k-1})$. The corresponding formulation is expressed as follows:
\begin{subequations} \label{Eq. matrix R approach2}
\begin{align}
    \begin{split}
        \hat{\mR}_k &= \left(\vy_k - h(\B_{k}, \vmu_{k \mid k-1}) \right)\left(\vy_k - h(\B_{k}, \vmu_{k \mid k-1}) \right)^{\top}\\
        & + \mH_k \mSigma_{k|k-1} \mH_k^{\top}.
    \end{split}
    \label{Eq. matrix R approach2b}
\end{align}
\end{subequations}
Note that this method will not add extra computational cost since the operation of $\mH_k \mSigma_{k|k-1} \mH_k^{\top}$ will be part of the Kalman gain calculation in~\eqref{Eq. Algorithm diag gain update}. \\

Now, we need to evaluate the distance of the observed minibatch relative to the training data distribution. To this aim, the Mahalanobis distance is calculated using the observation noise covariance matrix estimated in the previous time step, $\mR_{k-1}$:
\begin{equation} \label{Eq. Mahalanobis distance}
    d_M = \sqrt {\left(\vy_k - \hat{\vy}_k\right) \boldsymbol{R}_{k-1}^{-1} \left(\vy_k - \hat{\vy}_k\right)^{\top}}. 
\end{equation}
Here, the Mahalanobis distance, $d_M$, quantifies the deviation of the observed minibatch from the distribution of the training set, which is characterized by $\mR_{k-1}$. Then, we can calculate the regulation term as follows:
\begin{equation} \label{Eq. regulation term}
     \lambda = \mathbf{e}^{-\alpha d_M},
\end{equation}
where $\alpha$ is the scaling factor, treated as a hyperparameter, that modulates the influence of the Mahalanobis distance $d_M$. If the minibatch deviates significantly from the training set distribution, the regulation term diminishes and mitigates the effect of the minibatch on the updates. In contrast, when the minibatch aligns closely with the distribution of the training set, the regulation term approaches $1$, and effectively restores the behavior of the original algorithm. The regulation term $\lambda$ is multiplied by the update terms in Equations \eqref{Eq. Algorithm Updating Process},~\eqref{Eq. Algorithm diag cov update}, and~\eqref{Eq. EMA matrix R} to update $\vmu$, $\mSigma$, and $\mR$, respectively. A pseudocode of our final approach is shown in Algorithm \ref{alg:algorithm1}.

\begin{table*}
  \caption{OOD test set accuracy for distribution-shifted versions of the ImageNet dataset: ImageNetV2, ImageNet-Sketch, ImageNet-A, and ImageNet-R. The final column presents the average accuracy across all datasets. Bolded values indicate the highest performance in each column.}
  \label{table: distribution shift}
  \centering
  {\small
  \begin{tabular}{l|c|c|c|c|c|c}
    \toprule \addlinespace[1ex]
     \multicolumn{1}{c|}{\textbf{Method}} & \textbf{ImageNet} & \textbf{ImageNetV2} & \textbf{ImageNet-Sketch} &  \textbf{ImageNet-A} &  \textbf{ImageNet-R} & \textbf{Ave.} \\
     \midrule
     \textbf{Zero-Shot CLIP} \citep{radford2021learning} & 60.33 & 53.62 & 34.38 & 21.64 & 50.00 & 43.99\\
     \textbf{CoOp} \citep{zhou2022learning} & 63.11 & 54.84 & 32.86 &  \textbf{22.14} & 54.94 & 45.58\\
     \textbf{CLIP-Adapter} \citep{gao2024clip} & 63.72 & 55.46 & 36.87 & 20.93 & 58.94 & 47.18\\
     \textbf{Tip-Adapter-F} \citep{zhang2022tipadaptertrainingfreeadaptionclip} & \textbf{65.51} & 57.10 & 36.18 & 20.84 & 60.33 & 47.99\\
     \colorbox{pink}{\textbf{Our Method}} & 65.30 & \textbf{58.84} & \textbf{37.38} & 21.27 & \textbf{61.90} & \textbf{48.93}\\
     
    \bottomrule
  \end{tabular}
  }
\end{table*}

\section{Experiments and Analysis}
\label{sec:Experiments and Analysis}
\subsection{Setup}
\label{sec:Setup}
This section presents the experimental results across diverse ID and OOD scenarios, utilizing various widely-used image classification datasets. In particular, our experimental setup is as follows:
\paragraph{Datasets}
We conduct ID experiments on various few-shot fine-tuning scenarios using ImageNet \citep{vinyals2016matching}, OxfordPets \citep{parkhi2012cats}, Food101 \citep{bossard2014food}, SUN397 \citep{xiao2010sun}, DTD \citep{cimpoi2014describing}, and EuroSAT \citep{helber2019eurosat}, comparing the results with multiple baselines. Furthermore, we conduct 16-shot fine-tuning experiments on well-established image classification datasets such as MNIST \citep{lecun1998gradient}, FashionMNIST \citep{xiao2017fashion}, CIFAR-10/100 \citep{krizhevsky2009learning}, and Places365 \citep{zhou2017places}, to evaluate improvements over zero-shot learning. Furthermore, we performed OOD experiments on ImageNet distribution-shifted variants, including ImageNetV2 \citep{recht2019imagenet}, ImageNet-Sketch \citep{wang2019learning}, ImageNet-A \citep{hendrycks2021natural}, ImageNet-R \citep{hendrycks2021many}, and ImageNet-C \citep{hendrycks2019robustness}.
\paragraph{Baselines}
We compare the performance of our algorithm with four baseline models: Zero-Shot CLIP \citep{radford2021learning}, CoOp \citep{zhou2022learning}, CLIP-Adapter \citep{gao2024clip}, and Tip-Adapter-F \citep{zhang2022tipadaptertrainingfreeadaptionclip}. We use the CLIP pre-trained model for the zero-shot scenario and other methods for few-shot fine-tuning scenarios. For a fair comparison, we select the best variant of CoOp to place the class token (at the end of the 16-token prompts).
\paragraph{Training Settings}
Our few-shot experiments are conducted using training sets with 1, 2, 4, 8, and 16 images per class, and the models are evaluated on the full test set. 
Both image encoder and text encoder leverage pre-trained weights provided by \citep{radford2021learning}, which serve as the backbones of our algorithm. We use the low-rank decomposition technique \citep{hu2021lora} for parameterization of the adapter, and freeze the backbones. We set the scaling factor $\alpha$ to 0.1 and the forgetting factor $\beta$ to 0.98.
To implement the Kalman filter efficiently, we adopt the method proposed by \cite{chang2022diagonal}, which reduces the computational complexity by one order of magnitude with respect to the number of trainable parameters ($n$).
For non-learnable methods, we use the handcrafted prompt "\textit{a photo of a} [CLASS]" for all datasets, except for EuroSAT, where we use the prompt "\textit{a centered satellite photo of a} [CLASS]". For image pre-processing, we follow the protocol in \citep{radford2021learning}, which involves resizing, center cropping, and normalizing pixel values. For baselines, we employ the AdamW optimizer with a linear learning rate decay schedule (weight decay = 0.0001), starting at $0.001$.
In our algorithm, the batch size is set to 10, with 2 epochs for smaller datasets such as OxfordPets and Food101, and 5 epochs for larger datasets such as ImageNet and SUN397. All experiments were conducted on an NVIDIA GeForce RTX 4090 GPU platform.
\subsection{Main Results}
\label{sec:Main Results}
\paragraph{ID experiments}
For the ID scenario, we used pre-trained backbones on ImageNet as the source dataset. The pre-trained model was then fine-tuned on ID target datasets across various few-shot scenarios. We report the test set accuracy in Figure~\ref{fig: main result} and Table~\ref{table: main result}, and compare our algorithm to baseline methods. As demonstrated, our algorithm consistently achieves superior ID performance in every few-shot scenario and, in certain cases, performs comparably to the baselines.
Specifically, on datasets such as OxfordPets, Food101, and SUN397, our algorithm demonstrates notable performance gains, especially as the number of labeled examples increases. Although the performance differences in ImageNet are less significant, our algorithm remains competitive and also demonstrates clear improvements on DTD and EuroSAT. The numerical results in Table 1 support the trends shown in Figure 2, with our algorithm frequently achieving the highest accuracy in various few-shot setups. These results indicate that our Kalman-based fine-tuning approach enables NGD-based adaptation of CLIP-based models to new tasks.

Furthermore, in Figure~\ref{fig: bar chart}, we provide additional experiments on some well-established image classification datasets. The figure highlights the improvement in test set accuracy achieved by our method after a 16-shot fine-tuning compared to zero-shot CLIP results.
As seen in Figure ~\ref{fig: bar chart}, the proposed method shows significant improvements in test set accuracy after 16 shots of fine-tuning compared to CLIP results from zero shots, with improvements ranging from about $2.7\%$ to $46\%$ depending on the dataset. The largest improvement is on the EuroSAT dataset, while smaller improvements are shown on the OxfordPets and Food101 datasets.
\paragraph{OOD experiments}
To evaluate the performance of our algorithm in OOD scenarios, we design and conduct two distinct experiments. In the first experiment, we assess the OOD generalization of our algorithm after training to show the inherent ability of uncertainty quantification in our Bayesian approach. Specifically, the model is trained using ImageNet as the ID dataset and evaluated on various distribution-shifted versions of ImageNet, which serve as the OOD target datasets. The test accuracy, compared to the baseline methods, is presented in Table~\ref{table: distribution shift}.
\begin{figure}
  \centering
    \includegraphics[width=1.0\linewidth]{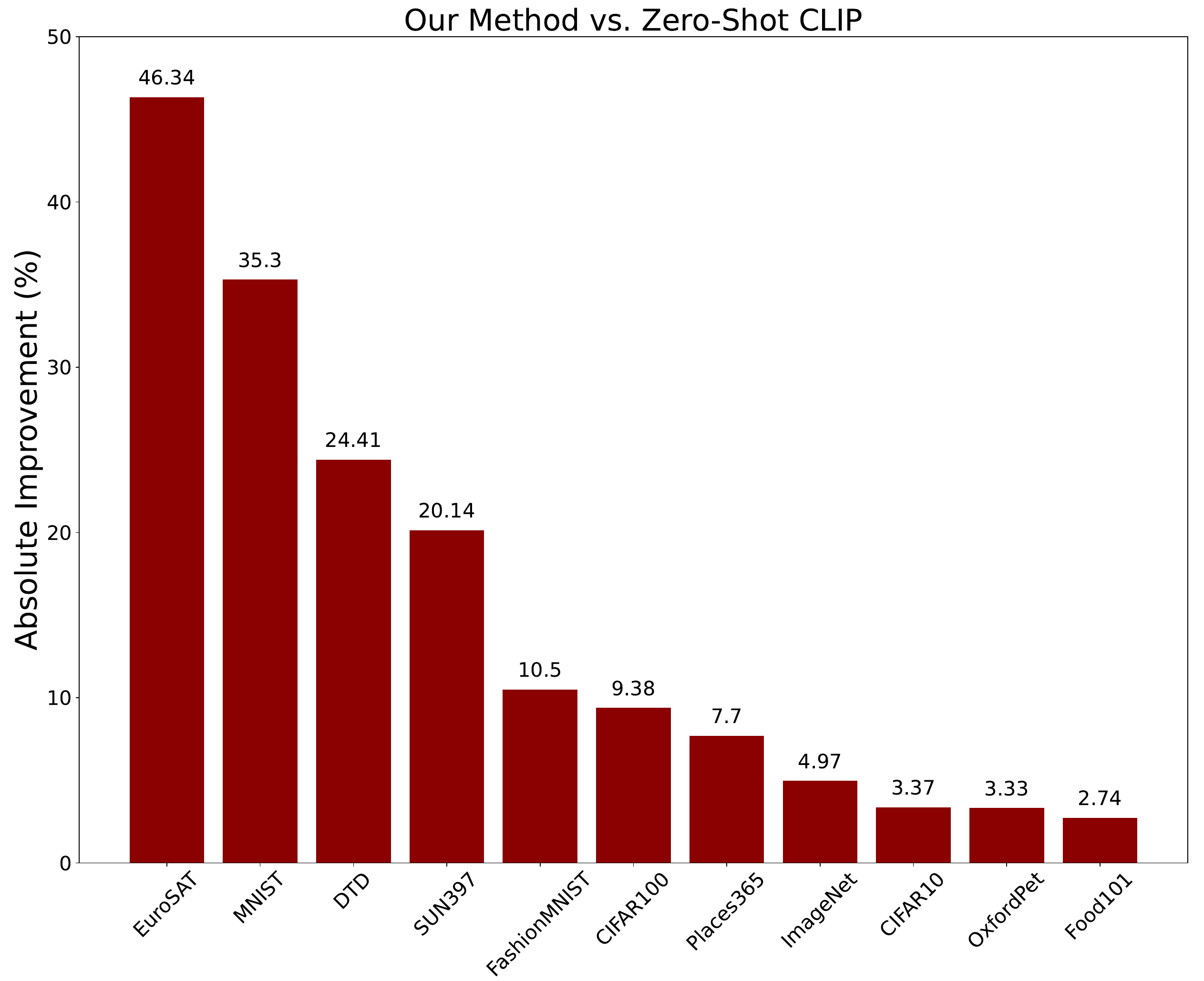}
  \caption{Absolute improvement in accuracy across 11 image classification datasets achieved by our method compared to Zero-Shot CLIP. Results are reported for the 16-shot fine-tuning scenario.}
  \label{fig: bar chart}
\end{figure}
Our algorithm achieves the highest average accuracy of $48.93\%$ in the distribution-shifted data sets, compared to $47.99\%$ of Tip-Adapter-F, $47.18\%$ of CLIP-Adapter, $45.58\%$ of CoOp and $43.99\%$ of Zero-Shot CLIP.

In the second experiment, we examine the robustness of our algorithm during training. For this purpose, the model is trained on ImageNet as the ID dataset, with corrupted images from ImageNet-C introduced as OOD data during fine-tuning. The experiments are carried out using five different values of the scaling factor, $\alpha$, and five varying proportions of the OOD data incorporated into the fine-tuning process. The corresponding test accuracy results are reported in Table~\ref{Table: alpha vs outlier}.
For higher proportions of OOD data during fine-tuning, a more aggressive regulation of the update step, and higher values of scaling factors ($\alpha$) tend to produce more robust performance compared to the original Kalman algorithm ($\alpha = 0$). 
Specifically, the results show that the use of $\alpha = 1$ can increase the accuracy by almost $9\%$ compared to the original Kalman filter. On the other hand, when the proportion of OOD data is low, a more conservative adjustment to the Kalman update and lower values of the scaling factor (such as $\alpha = 0.1$) perform better. 
Furthermore, the results show that when there are only small amounts of OOD data in the training set, the use of aggressive regulation with higher $\alpha$ values can be harmful and can cause instability during training.

\begin{table}
  \caption{Effect of scaling factor ($\alpha$) on model robustness to varying OOD data percentages from ImageNet-C in the ImageNet training dataset.}
  \label{Table: alpha vs outlier}
  \centering
  {\small
  \begin{tabular}{|c|c|c|c|c|c|}
    \toprule 
     \textbf{Scaling} & \multicolumn{5}{c|}{\textbf{Percentage of OOD Data}} \\ \addlinespace[0.5ex]
     % \midrule
     \textbf{Factor ($\alpha$)} & 1\% & 5\% & 10\% &  25\% &  50\% \\ \addlinespace[0.5ex]
     \toprule 
     $ 0.00$ & 64.90 & 63.45 & 59.33 & 48.03 & 46.87\\
     $0.01$ & 64.59 & 62.72 & 60.62 & 49.73 & 47.45\\
     $0.10$ & 64.08 & 64.67 & 61.33 & 52.01 & 51.25\\
     $0.50$ & \textit{Diverge} & 64.80 & 62.14 & 55.90 & 51.37\\
     $1.00$  & \textit{Diverge} & \textit{Diverge} & \textit{Diverge} & 56.97 & 52.06\\
    \bottomrule
  \end{tabular}
  }
\end{table}

\subsection{Ablation Study}
\label{sec: Ablation Study}

To evaluate the impact of the hyperparameter of the forgetting factor $\beta$ on the performance of the model, we performed a sensitivity analysis in multiple datasets. In this analysis, we measured the accuracy of the test set of the model for various $\beta$ values (see Table \ref{Table: beta ablation}). The results indicate that excessively high values of $\beta$ close to 1, as well as lower values (e.g., 0.85 or below), often lead to suboptimal performance or even divergence. In contrast, values of $\beta$ in the range of 0.95 to 0.99 generally yield better results.

\begin{table}
  \caption{Effect of forgetting factor ($\beta$) on model performance. Bolded values indicate the highest performance in each column.}
  \label{Table: beta ablation}
  \centering
  \scriptsize
  \setlength{\tabcolsep}{2pt} 
  \resizebox{\linewidth}{!}
  {
  \begin{tabular}{|c|c|c|c|c|c|c|}
    \toprule 
     \textbf{Forgetting} & \multirow{2}{*}{\textbf{ImageNet}} & \multirow{2}{*}{\textbf{OxfordPets}} & \multirow{2}{*}{\textbf{Food101}} & \multirow{2}{*}{\textbf{SUN397}} & \multirow{2}{*}{\textbf{DTD}} & \multirow{2}{*}{\textbf{EuroSAT}} \\ 
     % \midrule
     \textbf{Factor ($\beta$)} &   &   &   &    &   &  \\ 
     \toprule 
     $0.99$ & \textbf{65.47} & 85.24 & \textbf{81.24} & 70.30 & \textbf{67.12} & \textit{Diverge} \\
     $0.98$ & 65.30 & \textbf{89.23} & 80.06 & 72.95 & 66.56 & 83.89\\
     $0.95$ & 63.42 & 85.35 & 79.48 & \textbf{74.08} & 64.18 & 84.60\\
     $0.90$ & 63.25 & 88.57 & 79.00 & 73.90 & 64.23 & 85.30\\
     $0.85$ & 63.02 & 86.53 & 79.02 & 72.80 & 64.56 & \textbf{87.00}\\
     $0.80$  & 62.89 & 84.16 & 76.38 & 71.45 & 66.10 & 86.68\\
    \bottomrule
  \end{tabular}
  }
\end{table}

\section{Conclusion}
\label{sec: Conclusion}
In this study, we present a Kalman-based algorithm for fine-tuning vision-language pre-trained models like CLIP. 
Our Kalman-based optimization algorithm closely approximates the natural gradient direction within a Bayesian framework. While natural gradient facilitates improved ID performance, Bayesian formulation inherently enables uncertainty quantification, which leads to improvement in OOD generalization.
Extensive experiments are done on various image classification datasets in different ID and OOD scenarios for various few-shot fine-tuning setups. The empirical evidence shows that our algorithm consistently achieves superiority over--or comparability with--baseline methods.
Although this paper focuses on the vision-language classification task, extending the proposed method to other vision-language tasks, such as image captioning, visual question answering, and text-to-image generation, is left for future work.

\bibliography{bibliography.bib}

\begin{thebibliography}{10}

\bibitem{abdi2024loko}
Hossein Abdi, Mingfei Sun, Andi Zhang, Samuel Kaski, and Wei Pan.
\newblock Loko: Low-rank kalman optimizer for online fine-tuning of large
  models.
\newblock {\em arXiv preprint arXiv:2410.11551}, 2024.

\bibitem{altamirano2023robust}
Matias Altamirano, Fran{\c{c}}ois-Xavier Briol, and Jeremias Knoblauch.
\newblock Robust and scalable bayesian online changepoint detection.
\newblock In {\em International Conference on Machine Learning}, pages
  642--663. PMLR, 2023.

\bibitem{amari1998natural}
Shun-Ichi Amari.
\newblock Natural gradient works efficiently in learning.
\newblock {\em Neural computation}, 10(2):251--276, 1998.

\bibitem{bishop2006pattern}
Christopher~M Bishop and Nasser~M Nasrabadi.
\newblock {\em Pattern recognition and machine learning}, volume~4.
\newblock Springer, 2006.

\bibitem{bossard2014food}
Lukas Bossard, Matthieu Guillaumin, and Luc Van~Gool.
\newblock Food-101--mining discriminative components with random forests.
\newblock In {\em Computer vision--ECCV 2014: 13th European conference, zurich,
  Switzerland, September 6-12, 2014, proceedings, part VI 13}, pages 446--461.
  Springer, 2014.

\bibitem{chang2023low}
Peter~G Chang, Gerardo Dur{\'a}n-Mart{\'\i}n, Alexander~Y Shestopaloff, Matt
  Jones, and Kevin Murphy.
\newblock Low-rank extended kalman filtering for online learning of neural
  networks from streaming data.
\newblock {\em arXiv preprint arXiv:2305.19535}, 2023.

\bibitem{chang2022diagonal}
Peter~G Chang, Kevin~Patrick Murphy, and Matt Jones.
\newblock On diagonal approximations to the extended kalman filter for online
  training of bayesian neural networks.
\newblock In {\em Continual Lifelong Learning Workshop at ACML 2022}, 2022.

\bibitem{cimpoi2014describing}
Mircea Cimpoi, Subhransu Maji, Iasonas Kokkinos, Sammy Mohamed, and Andrea
  Vedaldi.
\newblock Describing textures in the wild.
\newblock In {\em Proceedings of the IEEE conference on computer vision and
  pattern recognition}, pages 3606--3613, 2014.

\bibitem{davtyan2022koala}
Aram Davtyan, Sepehr Sameni, Llukman Cerkezi, Givi Meishvili, Adam Bielski, and
  Paolo Favaro.
\newblock Koala: A kalman optimization algorithm with loss adaptivity.
\newblock In {\em Proceedings of the AAAI Conference on Artificial
  Intelligence}, volume~36, pages 6471--6479, 2022.

\bibitem{duran2024outlier}
Gerardo Duran-Martin, Matias Altamirano, Alexander~Y Shestopaloff, Leandro
  S{\'a}nchez-Betancourt, Jeremias Knoblauch, Matt Jones, Fran{\c{c}}ois-Xavier
  Briol, and Kevin Murphy.
\newblock Outlier-robust kalman filtering through generalised bayes.
\newblock {\em arXiv preprint arXiv:2405.05646}, 2024.

\bibitem{foret2020sharpness}
Pierre Foret, Ariel Kleiner, Hossein Mobahi, and Behnam Neyshabur.
\newblock Sharpness-aware minimization for efficiently improving
  generalization.
\newblock {\em arXiv preprint arXiv:2010.01412}, 2020.

\bibitem{gao2024clip}
Peng Gao, Shijie Geng, Renrui Zhang, Teli Ma, Rongyao Fang, Yongfeng Zhang,
  Hongsheng Li, and Yu~Qiao.
\newblock Clip-adapter: Better vision-language models with feature adapters.
\newblock {\em International Journal of Computer Vision}, 132(2):581--595,
  2024.

\bibitem{gomez2021decoupled}
Carlos~A G{\'o}mez-Uribe and Brian Karrer.
\newblock The decoupled extended kalman filter for dynamic exponential-family
  factorization models.
\newblock {\em Journal of Machine Learning Research}, 22(5):1--25, 2021.

\bibitem{gondal2024domain}
Muhammad~Waleed Gondal, Jochen Gast, Inigo~Alonso Ruiz, Richard Droste, Tommaso
  Macri, Suren Kumar, and Luitpold Staudigl.
\newblock Domain aligned clip for few-shot classification.
\newblock In {\em Proceedings of the IEEE/CVF Winter Conference on Applications
  of Computer Vision}, pages 5721--5730, 2024.

\bibitem{goodfellow2016deep}
Ian Goodfellow, Yoshua Bengio, Aaron Courville, and Yoshua Bengio.
\newblock {\em Deep learning}, volume~1.
\newblock MIT press Cambridge, 2016.

\bibitem{helber2019eurosat}
Patrick Helber, Benjamin Bischke, Andreas Dengel, and Damian Borth.
\newblock Eurosat: A novel dataset and deep learning benchmark for land use and
  land cover classification.
\newblock {\em IEEE Journal of Selected Topics in Applied Earth Observations
  and Remote Sensing}, 12(7):2217--2226, 2019.

\bibitem{hendrycks2021many}
Dan Hendrycks, Steven Basart, Norman Mu, Saurav Kadavath, Frank Wang, Evan
  Dorundo, Rahul Desai, Tyler Zhu, Samyak Parajuli, Mike Guo, et~al.
\newblock The many faces of robustness: A critical analysis of
  out-of-distribution generalization.
\newblock In {\em Proceedings of the IEEE/CVF international conference on
  computer vision}, pages 8340--8349, 2021.

\bibitem{hendrycks2019robustness}
Dan Hendrycks and Thomas Dietterich.
\newblock Benchmarking neural network robustness to common corruptions and
  perturbations.
\newblock {\em Proceedings of the International Conference on Learning
  Representations}, 2019.

\bibitem{hendrycks2021natural}
Dan Hendrycks, Kevin Zhao, Steven Basart, Jacob Steinhardt, and Dawn Song.
\newblock Natural adversarial examples.
\newblock In {\em Proceedings of the IEEE/CVF conference on computer vision and
  pattern recognition}, pages 15262--15271, 2021.

\bibitem{hennig2024computation}
Philipp Hennig, Jon Cockayne, Jonathan Wenger, and Marvin Pf{\"o}rtner.
\newblock Computation-aware kalman filtering and smoothing.
\newblock 2024.

\bibitem{hu2021lora}
Edward~J Hu, Yelong Shen, Phillip Wallis, Zeyuan Allen-Zhu, Yuanzhi Li, Shean
  Wang, Lu~Wang, and Weizhu Chen.
\newblock Lora: Low-rank adaptation of large language models.
\newblock {\em arXiv preprint arXiv:2106.09685}, 2021.

\bibitem{jiao2023learningmaskawarecliprepresentations}
Siyu Jiao, Yunchao Wei, Yaowei Wang, Yao Zhao, and Humphrey Shi.
\newblock Learning mask-aware clip representations for zero-shot segmentation,
  2023.

\bibitem{jones2024bayesian}
Matt Jones, Peter Chang, and Kevin Murphy.
\newblock Bayesian online natural gradient (bong).
\newblock {\em arXiv preprint arXiv:2405.19681}, 2024.

\bibitem{jones2022neural}
Matt Jones, David Mayo, Tyler Scott, Mengye Ren, Gamaleldin ElSayed, Katherine
  Hermann, and Michael~C Mozer.
\newblock Neural network online training with sensitivity to multiscale
  temporal structure.
\newblock In {\em NeurIPS workshop on Memory in Artificial and Real
  Intelligence (MemARI)}, 2022.

\bibitem{jones2022learning}
Matt Jones, Tyler~R Scott, Mengye Ren, Gamaleldin~Fathy Elsayed, Katherine
  Hermann, David Mayo, and Michael~Curtis Mozer.
\newblock Learning in temporally structured environments.
\newblock In {\em The Eleventh International Conference on Learning
  Representations}, 2022.

\bibitem{keskar2016large}
Nitish~Shirish Keskar, Dheevatsa Mudigere, Jorge Nocedal, Mikhail Smelyanskiy,
  and Ping Tak~Peter Tang.
\newblock On large-batch training for deep learning: Generalization gap and
  sharp minima.
\newblock {\em arXiv preprint arXiv:1609.04836}, 2016.

\bibitem{krizhevsky2009learning}
Alex Krizhevsky, Geoffrey Hinton, et~al.
\newblock Learning multiple layers of features from tiny images.
\newblock 2009.

\bibitem{kumar2022fine}
Ananya Kumar, Aditi Raghunathan, Robbie Jones, Tengyu Ma, and Percy Liang.
\newblock Fine-tuning can distort pretrained features and underperform
  out-of-distribution.
\newblock {\em arXiv preprint arXiv:2202.10054}, 2022.

\bibitem{lecun1998gradient}
Yann LeCun, L{\'e}on Bottou, Yoshua Bengio, and Patrick Haffner.
\newblock Gradient-based learning applied to document recognition.
\newblock {\em Proceedings of the IEEE}, 86(11):2278--2324, 1998.

\bibitem{li2024visionlanguagemodelfinetuningsimple}
Ming Li, Jike Zhong, Chenxin Li, Liuzhuozheng Li, Nie Lin, and Masashi
  Sugiyama.
\newblock Vision-language model fine-tuning via simple parameter-efficient
  modification, 2024.

\bibitem{murphy2012machine}
Kevin~P Murphy.
\newblock {\em Machine learning: a probabilistic perspective}.
\newblock MIT press, 2012.

\bibitem{murphy2023probabilistic}
Kevin~P Murphy.
\newblock {\em Probabilistic machine learning: Advanced topics}.
\newblock MIT press, 2023.

\bibitem{murtuza1994node}
Syed Murtuza and SF~Chorian.
\newblock Node decoupled extended kalman filter based learning algorithm for
  neural networks.
\newblock In {\em Proceedings of 1994 9th IEEE International Symposium on
  Intelligent Control}, pages 364--369. IEEE, 1994.

\bibitem{nguyen2024saftoutofdistributiongeneralizationfinetuning}
Bac Nguyen, Stefan Uhlich, Fabien Cardinaux, Lukas Mauch, Marzieh Edraki, and
  Aaron Courville.
\newblock Saft: Towards out-of-distribution generalization in fine-tuning,
  2024.

\bibitem{nocedal1999numerical}
Jorge Nocedal and Stephen~J Wright.
\newblock {\em Numerical optimization}.
\newblock Springer, 1999.

\bibitem{ollivier2018online}
Yann Ollivier.
\newblock Online natural gradient as a kalman filter.
\newblock 2018.

\bibitem{ollivier2019extended}
Yann Ollivier.
\newblock The extended kalman filter is a natural gradient descent in
  trajectory space.
\newblock {\em arXiv preprint arXiv:1901.00696}, 2019.

\bibitem{ovadia2019can}
Yaniv Ovadia, Emily Fertig, Jie Ren, Zachary Nado, David Sculley, Sebastian
  Nowozin, Joshua Dillon, Balaji Lakshminarayanan, and Jasper Snoek.
\newblock Can you trust your model's uncertainty? evaluating predictive
  uncertainty under dataset shift.
\newblock {\em Advances in neural information processing systems}, 32, 2019.

\bibitem{parkhi2012cats}
Omkar~M Parkhi, Andrea Vedaldi, Andrew Zisserman, and CV~Jawahar.
\newblock Cats and dogs.
\newblock In {\em 2012 IEEE conference on computer vision and pattern
  recognition}, pages 3498--3505. IEEE, 2012.

\bibitem{pascanu2013revisiting}
Razvan Pascanu and Yoshua Bengio.
\newblock Revisiting natural gradient for deep networks, 2013.

\bibitem{puskorius1991decoupled}
Gintaras~V Puskorius and Lee~A Feldkamp.
\newblock Decoupled extended kalman filter training of feedforward layered
  networks.
\newblock In {\em IJCNN-91-Seattle International Joint Conference on Neural
  Networks}, volume~1, pages 771--777. IEEE, 1991.

\bibitem{radford2021learning}
Alec Radford, Jong~Wook Kim, Chris Hallacy, Aditya Ramesh, Gabriel Goh,
  Sandhini Agarwal, Girish Sastry, Amanda Askell, Pamela Mishkin, Jack Clark,
  et~al.
\newblock Learning transferable visual models from natural language
  supervision.
\newblock In {\em International conference on machine learning}, pages
  8748--8763. PMLR, 2021.

\bibitem{recht2019imagenet}
Benjamin Recht, Rebecca Roelofs, Ludwig Schmidt, and Vaishaal Shankar.
\newblock Do imagenet classifiers generalize to imagenet?
\newblock In {\em International conference on machine learning}, pages
  5389--5400. PMLR, 2019.

\bibitem{reddi2019convergence}
Sashank~J Reddi, Satyen Kale, and Sanjiv Kumar.
\newblock On the convergence of adam and beyond.
\newblock {\em arXiv preprint arXiv:1904.09237}, 2019.

\bibitem{ruck1992comparative}
Dennis~W. Ruck, Steven~K. Rogers, Matthew Kabrisky, Peter~S. Maybeck, and
  Mark~E. Oxley.
\newblock Comparative analysis of backpropagation and the extended kalman
  filter for training multilayer perceptrons.
\newblock {\em IEEE transactions on pattern analysis \& machine intelligence},
  14(06):686--691, 1992.

\bibitem{schirmer2024test}
Mona Schirmer, Dan Zhang, and Eric Nalisnick.
\newblock Test-time adaptation with state-space models.
\newblock {\em arXiv preprint arXiv:2407.12492}, 2024.

\bibitem{shah1992optimal}
Samir Shah, Francesco Palmieri, and Michael Datum.
\newblock Optimal filtering algorithms for fast learning in feedforward neural
  networks.
\newblock {\em Neural networks}, 5(5):779--787, 1992.

\bibitem{shashua2020kalman}
Shirli Di-Castro Shashua and Shie Mannor.
\newblock Kalman meets bellman: Improving policy evaluation through value
  tracking.
\newblock {\em arXiv preprint arXiv:2002.07171}, 2020.

\bibitem{shih2024fast}
Frank Shih and Faming Liang.
\newblock Fast value tracking for deep reinforcement learning.
\newblock {\em arXiv preprint arXiv:2403.13178}, 2024.

\bibitem{singhal1988training}
Sharad Singhal and Lance Wu.
\newblock Training multilayer perceptrons with the extended kalman algorithm.
\newblock {\em Advances in neural information processing systems}, 1, 1988.

\bibitem{titsias2023kalman}
Michalis~K Titsias, Alexandre Galashov, Amal Rannen-Triki, Razvan Pascanu,
  Yee~Whye Teh, and Jorg Bornschein.
\newblock Kalman filter for online classification of non-stationary data.
\newblock {\em arXiv preprint arXiv:2306.08448}, 2023.

\bibitem{totaro2021fast}
Simone Totaro and Anders Jonsson.
\newblock Fast stochastic kalman gradient descent for reinforcement learning.
\newblock In {\em Learning for Dynamics and Control}, pages 1118--1129. PMLR,
  2021.

\bibitem{vinyals2016matching}
Oriol Vinyals, Charles Blundell, Timothy Lillicrap, Daan Wierstra, et~al.
\newblock Matching networks for one shot learning.
\newblock {\em Advances in neural information processing systems}, 29, 2016.

\bibitem{wang2019learning}
Haohan Wang, Songwei Ge, Zachary Lipton, and Eric~P Xing.
\newblock Learning robust global representations by penalizing local predictive
  power.
\newblock {\em Advances in Neural Information Processing Systems}, 32, 2019.

\bibitem{wang2024glcm}
Shuo Wang, Enlong Xie, Jinda Lu, Jinghan Li, and Yanbin Hao.
\newblock Glcm-adapter: Global-local content matching for few-shot clip
  adaptation.
\newblock 2024.

\bibitem{wilson2020bayesian}
Andrew~G Wilson and Pavel Izmailov.
\newblock Bayesian deep learning and a probabilistic perspective of
  generalization.
\newblock {\em Advances in neural information processing systems},
  33:4697--4708, 2020.

\bibitem{xiao2017fashion}
Han Xiao, Kashif Rasul, and Roland Vollgraf.
\newblock Fashion-mnist: a novel image dataset for benchmarking machine
  learning algorithms.
\newblock {\em arXiv preprint arXiv:1708.07747}, 2017.

\bibitem{xiao2010sun}
Jianxiong Xiao, James Hays, Krista~A Ehinger, Aude Oliva, and Antonio Torralba.
\newblock Sun database: Large-scale scene recognition from abbey to zoo.
\newblock In {\em 2010 IEEE computer society conference on computer vision and
  pattern recognition}, pages 3485--3492. IEEE, 2010.

\bibitem{zhang2022tipadaptertrainingfreeadaptionclip}
Renrui Zhang, Zhang Wei, Rongyao Fang, Peng Gao, Kunchang Li, Jifeng Dai,
  Yu~Qiao, and Hongsheng Li.
\newblock Tip-adapter: Training-free adaption of clip for few-shot
  classification, 2022.

\bibitem{zhou2017places}
Bolei Zhou, Agata Lapedriza, Aditya Khosla, Aude Oliva, and Antonio Torralba.
\newblock Places: A 10 million image database for scene recognition.
\newblock {\em IEEE transactions on pattern analysis and machine intelligence},
  40(6):1452--1464, 2017.

\bibitem{zhou2022learning}
Kaiyang Zhou, Jingkang Yang, Chen~Change Loy, and Ziwei Liu.
\newblock Learning to prompt for vision-language models.
\newblock {\em International Journal of Computer Vision}, 130(9):2337--2348,
  2022.

\bibitem{zhu2024enhancing}
Yao Zhu, Yuefeng Chen, Xiaofeng Mao, Xiu Yan, Yue Wang, Wang Lu, Jindong Wang,
  and Xiangyang Ji.
\newblock Enhancing few-shot clip with semantic-aware fine-tuning.
\newblock {\em IEEE Transactions on Neural Networks and Learning Systems},
  2024.

\end{thebibliography}
\bibliographystyle{plain}

\end{document}